\def\b0{{0}}
\def\RR{\mathbb{R}}
\def\>{\rangle}
\def\vec{\operatorname{\mathop{vec}}}
\def\Set#1{\left\{ #1 \right\}}
\newcommand{\E}{\mathbb{E}}
\newcommand{\distas}[1]{\mathbin{\overset{#1}{\sim}}}
\newtheorem{theorem}{Theorem}[section]
\newtheorem{lemma}[theorem]{Lemma}
\newtheorem{corollary}[theorem]{Corollary}
\newtheorem{assumptions}[theorem]{Assumption}
\newenvironment{proof}{\par\noindent{\bf Proof:\ }}{\hfill$\Box$\\[2mm]}
\newcommand{\Id}{\mathbb{I}}
\newcommand{\littleO}[1]{o\left(#1\right)}
\newcommand{\bigO}[1]{\mathcal{O}\left(#1\right)}
\newcommand{\bigOmg}[1]{\Omega\left(#1\right)}
\newcommand{\bigTildeOmg}[1]{\tilde{\Omega}\left(#1\right)}
\newcommand{\bigTheta}[1]{\Theta\left(#1\right)}
\newcommand{\bigexp}[1]{\exp\left(#1\right)}
\newcommand{\norm}[1]{\left\|#1\right\|}
\newcommand{\abs}[1]{\left|#1\right|}
\newcommand{\svmin}[1]{\sigma_{\rm min}\left(#1\right)}
\newcommand{\evmin}[1]{\lambda_{\rm min}\left(#1\right)}
\def\Lip{\mathrm{Lip}}
\def\bydef{\mathrel{\mathop:}=}
\def\PP{\mathbb{P}}
\def\tr{\mathop{\rm tr}\nolimits}
\def\min{\mathop{\rm min}\nolimits}
\def\max{\mathop{\rm max}\nolimits}
\icmltitlerunning{On the Proof of Global Convergence of Gradient Descent for Deep ReLU Networks with Linear Widths}
\begin{document}

\twocolumn[
\icmltitle{On the Proof of Global Convergence of Gradient Descent for \\Deep ReLU Networks with Linear Widths}



\icmlsetsymbol{equal}{*}

\begin{icmlauthorlist}
    \icmlauthor{Quynh Nguyen}{mpi}
\end{icmlauthorlist}

\icmlaffiliation{mpi}{MPI-MIS, Germany}

\icmlcorrespondingauthor{Quynh Nguyen}{qnguyen@mis.mpg.de}

\icmlkeywords{Theory of deep learning, gradient descent}

\vskip 0.3in
]



\printAffiliationsAndNotice{}  
\begin{abstract}
    We give a simple proof for the global convergence of gradient descent in training deep ReLU networks with the standard square loss,
    and show some of its improvements over the state-of-the-art.
    In particular, while prior works require
    all the hidden layers to be wide with width at least $\Omega(N^8)$ ($N$ being the number of training samples),
    we require a single wide layer of linear, quadratic or cubic width depending on the type of initialization.
    Unlike many recent proofs based on the Neural Tangent Kernel (NTK), our proof need {\em not} 
    track the evolution of the entire NTK matrix, 
    or more generally, any quantities related to the changes of activation patterns during training.
    Instead, we only need to track the evolution of the output at the last hidden layer, 
    which can be done much more easily thanks to the Lipschitz property of ReLU.
    Some highlights of our setting: (i) all the layers are trained with standard gradient descent, 
    (ii) the network has standard parameterization as opposed to the NTK one,
    and (iii) the network has a single wide layer as opposed to having all wide hidden layers as in most of NTK-related results.
\end{abstract}

\section{Introduction}
Understanding why gradient descent methods can often find a global optimum in minimizing the non-convex loss surface of neural nets 
is one of the key problems in deep learning theory.
Recently, this problem has been approached from the perspective of the Neural Tangent Kernel (NTK) \cite{JacotEtc2018}.
Consider an $L$-layer network with layer widths $\Set{n_0,\ldots,n_{L}}.$
Let $X\in\RR^{N\times n_0}$ and $Y\in\RR^{N\times n_L}$ be the training data, then the output at layer $l$ is given by
\begin{align}\label{eq:def_feature_map}
    F_l=\begin{cases}
	    X & l=0,\\
	    \sigma(F_{l-1}W_l) & l\in[L-1],\\
	    F_{L-1} W_L & l=L, 
        \end{cases}
\end{align}
where $W_l\in\RR^{n_{l-1}\times n_l}$, and $\sigma(x)=\max(0,x)$ applied componentwise.
The training loss is defined as
\begin{align*}
    \Phi(\theta) = \frac{1}{2}\norm{F_L(\theta)-Y}_F^2,
\end{align*}
where $\theta=(W_l)_{l=1}^{L}.$
The NTK matrix is
$K(\theta)=\left[\frac{\partial\vec(F_L)}{\partial\vec(\theta)}\right] \left[\frac{\partial\vec(F_L)}{\partial\vec(\theta)}\right]^T.$
Our starting observation is that
\begin{align*}
    \norm{\nabla\Phi(\theta)}_2^2 \geq 2\evmin{K(\theta)} \Phi(\theta).
\end{align*}
This resembles the Polyak-Lojasiewicz (PL) inequality \cite{Polyak1963} except that we do not have a constant on the RHS.
Nevertheless, it suggests that if $\evmin{K(\theta)}$ is bounded away from zero, both at initialization and during training,
then GD may still converge to a global optimum.
To show this property at initialization, one can resort to standard concentration tools, which we discuss later.
To show it holds throughout training, one popular way in the literature is to bound 
the traveling distance of GD from the initialization and the local Lipschitz constant of $K(\theta)$, and then invoke Weyl's eigenvalue inequality.
As $K$ involves the derivatives of $\sigma$ at various activation neurons,
it turns out that bounding the local Lipschitz constant of the NTK matrix can be done conveniently for sufficiently smooth activations \cite{DuEtal2019}.
However, when $\sigma'$ is not Lipschitz continuous (e.g.\ for ReLU), the entire analysis may become much more involved.
In particular, most of the existing proofs for ReLU nets require to study various quantities related to 
the changes of the activation patterns during training, see e.g.\ \cite{AllenZhuEtal2018, ZouGu2019}.
In this paper, we discuss a simple alternative proof, where such analysis is not needed.
In fact, starting from the following decomposition of the empirical NTK matrix
\begin{align}
    K=\sum_{l=1}^{L} \left[\frac{\partial\vec(F_L)}{\partial\vec(W_l)}\right] \left[\frac{\partial\vec(F_L)}{\partial\vec(W_l)}\right]^T
\end{align}
one can separate the sum into two terms: the first term involves $\sum_{l=1}^{L-1}(\cdot)$, 
and the second term corresponds to the derivative at $W_L$, which is given by $\Id_{n_L} \otimes (F_{L-1}F_{L-1}^T).$
Since both terms are positive semidefinite, one obtains
\begin{align}\label{eq:key}
    \evmin{K}\geq\evmin{F_{L-1}F_{L-1}^T}.
\end{align}
From here, in order to obtain a PL-like inequality, 
it suffices to bound the RHS of \eqref{eq:key} at initialization and keep track of $F_{L-1}$ during training.
This can be done efficiently without studying the changes of activation patterns.
Let us highlight that this is different from most of the prior works, where the output layer is fixed
and only the hidden ones are optimized, in which case the PL analysis necessarily involves $\sigma'$ from the hidden layers.
Besides that, prior works also need to study other ``smoothness'' properties of the loss
(local Lipschitz property of the gradient, or that of the Jacobian of the network),
which again requires bounds on the various quantities related to the changes of the activation patterns at different layers,
and thus make their proofs more involved.

In this work, we present an alternative proof framework, which does not require any analysis of the activation pattern changes.
It consists of two key steps mentioned above:
(a) bounding $\evmin{F_{L-1}F_{L-1}^T}$ at initialization, 
and (b) bounding the changes of $F_{L-1}$ during training. 
To show the convergence, we introduce a small trick to relate the loss of each iteration with that of the previous one.
That means, an explicit analysis of the ``smoothness'' of the loss is not needed.
Lastly, we show that our proof leads to improved bounds on layer widths in terms of the dependency on $N$.

\paragraph{Main Contributions.} 
First, we provide some easy-to-check sufficient conditions on the initialization
under which GD is guaranteed to converge to a global optimum.
Then, we show that all these conditions are satisfied 
when the width of the last hidden layer exceeds the number of training samples (all the remaining layers can have constant widths).
For LeCun's initialization,
we show that these conditions are satisfied (and consequently, GD finds a global optimum)
if the last hidden layer has quadratic width in case of two-layer networks, or cubic width in case of deep architectures.
Although results with similar orders of over-parameterization have been recently obtained
for deep nets with {\em smooth} activation functions \cite{HuangYau2020,QuynhMarco2020},
this is the first time, to the best of our knowledge, that such a result is proved for deep ReLU models.

\section{Main Result}\label{sec:main}
The GD updates are given by: $\theta_{k+1} = \theta_k - \eta \nabla\Phi(\theta_k)$.
Define the shorthand $F_l^k=F_l(\theta_k).$
We omit the argument $\theta$ and write just $F_l$ when it is clear from the context.

Let us first state some useful inequalities (see Lemma B.2 of \cite{QuynhMarco2020}).
There, the proof mainly uses the triangle inequality, and the Lip. property of ReLU.
\begin{lemma}\label{lem:merged}
    For every $\theta=(W_p)_{p=1}^L$ and $l\in[L],$ it holds
    \begin{align}
	\norm{\nabla_{W_l}\Phi}_F 
	&\leq\norm{X}_F\prod_{\substack{p=1\\p\neq l}}^L\norm{W_p}_2 \norm{F_L-Y}_F.\label{eq:grad_norm}
    \end{align}      
   Furthermore, let $\theta_a=(W_l^a)_{l=1}^L, \theta_b=(W_l^b)_{l=1}^L$, 
   and  $\max(\norm{W_l^a}_2, \norm{W_l^b}_2)\leq \bar{\lambda}_l$ for some $\bar{\lambda}_l\in\RR.$ 
   Then, for every $l\in[L]$ we have
    \begin{align}
	&\norm{F_l(\theta_a)-F_l(\theta_b)}_F\nonumber\\
	&\leq \norm{X}_F \left(\prod_{l=1}^{l} \bar{\lambda}_l\right) \sum_{p=1}^l \bar{\lambda}_p^{-1} \norm{W_p^a-W_p^b}_2 \label{eq:Lip_Fl}
    \end{align}
\end{lemma}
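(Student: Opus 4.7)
Both inequalities are structural bounds for feedforward ReLU networks and should follow from just two facts: (a) ReLU is 1-Lipschitz and satisfies $\sigma(0)=0$, so $\|\sigma(A)-\sigma(B)\|_F \leq \|A-B\|_F$ and $\|\sigma(A)\|_F \leq \|A\|_F$; (b) sub-multiplicativity $\|AB\|_F \leq \|A\|_F \|B\|_2$. My plan is to prove \eqref{eq:Lip_Fl} first by induction on $l$, and then to derive \eqref{eq:grad_norm} from a direct chain-rule expansion that reuses the same forward-pass bound.

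\textbf{Step 1: a forward-pass norm bound.} By a one-line induction using (a)--(b), I would first establish the auxiliary estimate
\begin{align*}
    \|F_l(\theta)\|_F \;\leq\; \|X\|_F \prod_{p=1}^{l}\|W_p\|_2,
\end{align*}
which will be used repeatedly below. This is immediate because $\|F_l\|_F = \|\sigma(F_{l-1}W_l)\|_F \leq \|F_{l-1}W_l\|_F \leq \|F_{l-1}\|_F \|W_l\|_2$.

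\textbf{Step 2: the Lipschitz bound \eqref{eq:Lip_Fl}.} I would induct on $l$. The base case $l=1$ is $\|\sigma(X W_1^a)-\sigma(X W_1^b)\|_F \leq \|X\|_F \|W_1^a - W_1^b\|_2$, which matches the claim. For the inductive step, split and use the triangle inequality:
\begin{align*}
    \|F_l(\theta_a) - F_l(\theta_b)\|_F
    &\leq \|F_{l-1}(\theta_a) W_l^a - F_{l-1}(\theta_b) W_l^b\|_F \\
    &\leq \|F_{l-1}(\theta_a) - F_{l-1}(\theta_b)\|_F\,\bar{\lambda}_l + \|F_{l-1}(\theta_b)\|_F\,\|W_l^a - W_l^b\|_2.
\end{align*}
Substituting the inductive hypothesis into the first term and the Step~1 bound into the second yields precisely the stated expression after factoring out $\|X\|_F\prod_{p=1}^l\bar{\lambda}_p$.

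\textbf{Step 3: the gradient bound \eqref{eq:grad_norm}.} By the chain rule applied to $\Phi = \tfrac12\|F_L-Y\|_F^2$, the gradient with respect to $W_l$ is the product (in the appropriate tensor/matrix sense) of $F_{l-1}^T$ on the left, diagonal Jacobians of $\sigma$ (of operator norm at most $1$) interleaved with weight matrices $W_{l+1},\ldots,W_L$, and the residual $(F_L-Y)$ on the right. Taking Frobenius and operator norms and using sub-multiplicativity bounds $\|\nabla_{W_l}\Phi\|_F$ by $\|F_{l-1}\|_F \cdot \prod_{p=l+1}^L \|W_p\|_2 \cdot \|F_L-Y\|_F$, and applying Step~1 to $\|F_{l-1}\|_F$ gives exactly \eqref{eq:grad_norm}.

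\textbf{Main obstacle.} None of the steps is deep; the only thing to be careful with is bookkeeping for $l=L$ in \eqref{eq:grad_norm}, where the output layer is linear (no $\sigma$), and making sure the ReLU Jacobians are correctly handled without requiring differentiability at $0$ (one can work with subgradients, or equivalently exploit the stronger 1-Lipschitz bound in Step~2, which bypasses pointwise differentiation altogether). Everything else is triangle inequalities and sub-multiplicativity.
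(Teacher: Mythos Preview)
Your proposal is correct and matches the approach the paper indicates: the paper does not prove this lemma in-house but cites Lemma~B.2 of \cite{QuynhMarco2020}, noting that ``the proof mainly uses the triangle inequality, and the Lip.\ property of ReLU,'' which is precisely your Steps~1--3 (forward norm bound, telescoping/inductive Lipschitz estimate, and chain-rule backprop with $\sigma'$ contributing operator norm at most $1$). The only bookkeeping to make explicit in Step~3 is that the Hadamard product with the $0$--$1$ mask $\sigma'(\cdot)$ does not increase the Frobenius norm, after which $\norm{\nabla_{W_l}\Phi}_F\le\norm{F_{l-1}}_2\norm{\delta_l}_F\le\norm{F_{l-1}}_F\norm{\delta_l}_F$ combined with your Step~1 gives \eqref{eq:grad_norm}.
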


Our main theorem is the following.
\begin{theorem}\label{thm:general}
    Consider a deep ReLU network \eqref{eq:def_feature_map} where the width of the last hidden layer satisfies $n_{L-1}\geq N.$
    Let $(C_l)_{l=1}^{L}$ be any sequence of positive numbers.
    Define the following quantities:
    \begin{equation}\label{eq:notation_init}
	\alpha_0=\svmin{F_{L-1}^0},\;
	\bar{\lambda}_l = \norm{W_l^0}_2 + C_l,\;
	\bar{\lambda}_{i\to j} = \prod_{l=i}^j\bar{\lambda}_l .
    \end{equation}
    Assume that the following conditions are satisfied at the initialization:
    \begin{align}
	&\alpha_0^2 \geq 16\norm{X}_F \max_{l\in[L]} \frac{\bar{\lambda}_{1\to L}}{\bar{\lambda}_l C_l} \sqrt{2\Phi(\theta_0)} \label{eq:initW}\\
	&\alpha_0^3 \geq 32\norm{X}_F^2 \bar{\lambda}_{L} \sum_{l=1}^{L-1} \frac{\bar{\lambda}_{1\to L-1}^2}{\bar{\lambda}_l^{2}} \sqrt{2\Phi(\theta_0)} \label{eq:initF} \\
	&\alpha_0^2 \geq 16\norm{X}_F^2 \bar{\lambda}_L^2 \sum_{l=1}^{L-1} \frac{\bar{\lambda}_{1\to L-1}^2}{\bar{\lambda}_l^{2}} \label{eq:initS} 
    \end{align}
    Let the learning rate satisfy
    \begin{align}\label{eq:eta}
	\eta < \min\left( \frac{8}{\alpha_0^2},\; \norm{X}_F^{-2} \bar{\lambda}_{1\to L}^{-2} \left[\sum_{l=1}^{L-1}\bar{\lambda}_l^{-2}\right] \left[\sum_{l=1}^L\bar{\lambda}_l^{-2}\right]^{-2} \right)
    \end{align}
    Then the loss converges to a global minimum as 
    \begin{align}
	\Phi(\theta_k)\leq \left(1-\eta\frac{\alpha_0^2}{8}\right)^k\ \Phi(\theta_0),
    \end{align}
    for every $k\geq 0.$
\end{theorem}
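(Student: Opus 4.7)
The plan is to prove the loss bound $\Phi(\theta_k)\leq(1-\eta\alpha_0^2/8)^k\Phi(\theta_0)$ by induction on $k$, together with two invariants that sustain the induction: (A) $\norm{W_l^k - W_l^0}_F \leq C_l$ for every $l \in [L]$, so that $\norm{W_l^k}_2 \leq \bar{\lambda}_l$; and (B) $\svmin{F_{L-1}^k} \geq \alpha_0/2$. The base case $k=0$ is immediate from the definitions of $\alpha_0$ and $\bar{\lambda}_l$. For the inductive step at $k+1$, I would establish (A), then (B), and finally the loss decrease, in this order.

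For (A), the gradient bound \eqref{eq:grad_norm} combined with the spectral bounds on $W_p^j$ for $j \leq k$ gives $\norm{W_l^{j+1}-W_l^j}_F \leq \eta\norm{X}_F(\bar{\lambda}_{1\to L}/\bar{\lambda}_l)\sqrt{2\Phi(\theta_j)}$. Summing over $j=0,\ldots,k$, inserting the inductive loss bound, and using the elementary inequality $1-\sqrt{1-x}\geq x/2$ for $x\in[0,1]$ to sum the resulting geometric series gives an upper bound that condition \eqref{eq:initW} is tailored to make $\leq C_l$. For (B), apply the Lipschitz bound \eqref{eq:Lip_Fl} between $\theta_{k+1}$ and $\theta_0$ and feed in the estimates from (A); condition \eqref{eq:initF} is exactly what is needed so that $\norm{F_{L-1}^{k+1}-F_{L-1}^0}_F \leq \alpha_0/2$, and Weyl's singular-value inequality then yields (B).

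For the loss bound the key trick is to decompose
\begin{align*}
F_L^{k+1}-Y = (F_{L-1}^{k+1}-F_{L-1}^k)W_L^{k+1} + F_{L-1}^k W_L^{k+1} - Y.
\end{align*}
Substituting the $W_L$-update $W_L^{k+1}=W_L^k - \eta F_{L-1}^{k,T}(F_L^k-Y)$ into the second piece collapses it to $(\Id - \eta F_{L-1}^k F_{L-1}^{k,T})(F_L^k-Y)$. Provided $\eta\norm{F_{L-1}^k}_2^2\leq 1$ (guaranteed by the second term of \eqref{eq:eta}, since $\norm{F_{L-1}^k}_2\leq\norm{X}_F\bar{\lambda}_{1\to L-1}$), this piece contracts by at most $1-\eta\svmin{F_{L-1}^k}^2 \leq 1-\eta\alpha_0^2/4$ thanks to (B) at step $k$. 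The first piece is a perturbation: bounding $\norm{F_{L-1}^{k+1}-F_{L-1}^k}_F$ via \eqref{eq:Lip_Fl} applied to consecutive iterates, and the per-step weight increments via \eqref{eq:grad_norm}, gives an error of the form $\eta\cdot\norm{X}_F^2\bar{\lambda}_L^2\sum_{p=1}^{L-1}(\bar{\lambda}_{1\to L-1}^2/\bar{\lambda}_p^2)\norm{F_L^k-Y}_F$, whose coefficient is at most $\eta\alpha_0^2/16$ precisely under condition \eqref{eq:initS}. Combining, $\norm{F_L^{k+1}-Y}_F \leq (1-3\eta\alpha_0^2/16)\norm{F_L^k-Y}_F$, and squaring (using $\eta<8/\alpha_0^2$ from \eqref{eq:eta} to absorb the $O(\eta^2)$ term) delivers $\Phi(\theta_{k+1})\leq(1-\eta\alpha_0^2/8)\Phi(\theta_k)$.

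The main obstacle is the last step: without a smoothness-type bound on $\Phi$, one must find another way to relate $\Phi(\theta_{k+1})$ to $\Phi(\theta_k)$, and for ReLU such smoothness would normally require tracking activation-pattern changes. The decomposition sidesteps this entirely by isolating the $W_L$-update, which on its own is ordinary least squares and contracts \emph{exactly} via $\Id-\eta F_{L-1}^k F_{L-1}^{k,T}$, from the effect of the hidden-layer updates, whose contribution is $O(\eta)$ and hence absorbable; only the mild Lipschitz property of ReLU is then used (through Lemma \ref{lem:merged}). The three initialization conditions \eqref{eq:initW}--\eqref{eq:initS} exactly encode what is needed for (A), (B), and the survival of the contraction against the perturbation, respectively.
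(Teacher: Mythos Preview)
Your proof follows essentially the same route as the paper's: the same induction invariants, the same use of \eqref{eq:grad_norm}--\eqref{eq:Lip_Fl} for (A) and (B), and crucially the same transition quantity $G=F_{L-1}^kW_L^{k+1}$ to isolate the $W_L$-update. The one variation is that the paper expands $\|F_L^{k+1}-Y\|_F^2$ around $F_L^k-Y$ (picking up a separate $\|F_L^{k+1}-F_L^k\|_F^2$ term, which is what the second branch of \eqref{eq:eta} is designed to control), whereas you apply the triangle inequality directly to $\|F_L^{k+1}-Y\|_F$ and only then square.

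This variation introduces a small constant gap in your last step. Squaring $(1-3\eta\alpha_0^2/16)$ gives $1-3\eta\alpha_0^2/8+9\eta^2\alpha_0^4/256$, and this is $\leq 1-\eta\alpha_0^2/8$ only when $\eta\alpha_0^2\leq 64/9$, not over the full range $\eta<8/\alpha_0^2$ that the theorem allows. The paper's squared-norm expansion sidesteps this: there the $O(\eta^2)$ term is $\eta^2Q_1^2$ with $Q_1=\|X\|_F^2\bar{\lambda}_{1\to L}^2\sum_{l=1}^L\bar{\lambda}_l^{-2}$, and the second branch of \eqref{eq:eta} says exactly $\eta Q_1^2\leq Q_2$, so that $\eta^2Q_1^2+\eta Q_2\leq 2\eta Q_2\leq \eta\alpha_0^2/8$ by \eqref{eq:initS}, yielding the stated rate cleanly. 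Your argument is otherwise sound and does prove linear convergence; it just does not recover the exact rate $(1-\eta\alpha_0^2/8)^k$ for \emph{every} $\eta$ permitted by \eqref{eq:eta}.
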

\begin{proof}
    We show by induction for every $k\geq 0$ that
    \begin{align}\label{eq:induction_assump}
	\begin{cases}
	    \norm{W_l^r}_2\leq \bar{\lambda}_l, \quad l\in[L], \,r\in[0,k],\\
	    \svmin{F_{L-1}^r} \geq \frac{1}{2}\alpha_0,\quad r\in[0,k],\\
	    \Phi(\theta_r)\leq \Big(1-\eta\frac{\alpha_0^2}{8}\Big)^r \Phi(\theta_0),\quad r\in[0,k],
	\end{cases} 
    \end{align}
    Clearly, \eqref{eq:induction_assump} holds for $k=0.$
    Assume that \eqref{eq:induction_assump} holds up to iteration $k$.
    By the triangle inequality,
    \begin{align*}
	&\norm{W_l^{k+1}-W_l^0}_F 
	\leq\sum_{s=0}^{k} \norm{W_l^{s+1}-W_l^s}_F \\
	&=\eta\sum_{s=0}^{k} \norm{\nabla_{W_l}\Phi(\theta_s)}_F \\
	&\leq\eta\sum_{s=0}^{k} \norm{X}_F \prod_{\substack{p=1\\p\neq l}}^L\norm{W_p^s}_2 \norm{F_L^s-Y}_F\\
	    &\leq\eta\norm{X}_F \bar{\lambda}_{1\to L} \bar{\lambda}_l^{-1}
	    \sum_{s=0}^k \Big(1-\eta\frac{\alpha_0^2}{8}\Big)^{s/2} \norm{F_L^0-Y}_F, 
    \end{align*}	
    where the 2nd inequality follows from \eqref{eq:grad_norm}, and the last one follows from induction assumption. 
    Let $u\bydef\sqrt{1-\eta\alpha_0^2/8}$. The RHS of the previous expression is bounded as
    \begin{align*}
	&\frac{8}{\alpha_0^2} \norm{X}_F \bar{\lambda}_{1\to L} \bar{\lambda}_l^{-1} (1-u^2) \frac{1-u^{k+1}}{1-u} \norm{F_L^0-Y}_F\\
	&\leq \frac{16}{\alpha_0^2} \norm{X}_F \bar{\lambda}_{1\to L} \bar{\lambda}_l^{-1} \norm{F_L^0-Y}_F,\quad \textrm{ since } u\in (0, 1)\\
	&\leq C_l, \quad\textrm{by \eqref{eq:initW}} .
    \end{align*}
    By Weyl's inequality, this implies
    \begin{align}\label{eq:W_op}
	\norm{W_l^{k+1}}_2\leq \norm{W_l^0}_2 + C_l = \bar{\lambda}_l
    \end{align}
    Next, we have
    \begin{align*}
	&\norm{F_{L-1}^{k+1}-F_{L-1}^0}_F\\
	&\leq\norm{X}_F \bar{\lambda}_{1\to L-1} \sum_{l=1}^{L-1} \bar{\lambda}_l^{-1} \norm{W_l^{k+1}-W_l^0}_2,\quad \textrm{by \eqref{eq:Lip_Fl}}\\
	&\leq\frac{16}{\alpha_0^2}\norm{X}_F^2 \bar{\lambda}_{1\to L-1} \bar{\lambda}_{1\to L} \sum_{l=1}^{L-1} \bar{\lambda}_l^{-2} \sqrt{2\Phi(\theta_0)} \\
	&\leq\frac{1}{2} \alpha_0, \quad \textrm{by \eqref{eq:initF}}.
    \end{align*}
    This implies that $\svmin{F_{L-1}^{k+1}}\geq \frac{1}{2}\alpha_0.$
    So far, we have proved that the first two inequalities in \eqref{eq:induction_assump} hold for $k+1.$
    It remains to show that the third one also holds at $k+1.$
    Let us define the matrix $G=F_{L-1}^k W_L^{k+1}.$  
    Then, one has
    \begin{align*}
	&2\Phi(\theta_{k+1})\\
	&= 2\Phi(\theta_k) + \norm{F_L^{k+1}-F_L^k}_F^2 + 2\tr(F_L^{k+1}-F_L^k)(F_L^k-Y)^T\\
	&= 2\Phi(\theta_k) + \norm{F_L^{k+1}-F_L^k}_F^2 + 2\tr(F_L^{k+1}-G)(F_L^k-Y)^T\\
	&\quad + 2\tr(G-F_L^k)(F_L^k-Y)^T .
    \end{align*}
    Let us bound each term individually. 
    Using \eqref{eq:grad_norm}-\eqref{eq:Lip_Fl}, we have
    \begin{align*}
	\norm{F_L^{k+1}-F_L^k}_F
	&\leq\norm{X}_F \bar{\lambda}_{1\to L} \sum_{l=1}^{L} \bar{\lambda}_l^{-1} \norm{W_l^{k+1}-W_l^k}_2\\
	&\leq \eta \underbrace{\norm{X}_F^2 \bar{\lambda}_{1\to L}^2 \sum_{l=1}^{L} \bar{\lambda}_l^{-2}}_{Q_1} \norm{F_L^k-Y}_F .
    \end{align*}
    Furthermore, we have $F_L^{k+1}-G=(F_{L-1}^{k+1}-F_{L-1}^{k})W_L^{k+1}$, and thus it holds
    \begin{align*}
	&\tr(F_L^{k+1}-G)(F_L^k-Y)^T\\
	&\leq \norm{F_{L-1}^{k+1}-F_{L-1}^k}_F \norm{W_L^{k+1}}_2 \norm{F_L^k-Y}_F \\
	&\leq \eta \underbrace{\norm{X}_F^2 \bar{\lambda}_{1\to L-1}^2 \bar{\lambda}_L^2 \sum_{l=1}^{L-1} \bar{\lambda}_l^{-2}}_{Q_2} \norm{F_L^k-Y}_F^2,\; \textrm{by \eqref{eq:Lip_Fl}, \eqref{eq:W_op}}
    \end{align*}
    Lastly, by using the definition of $G$ and the fact that 
    $\nabla_{W_L}\Phi(\theta_k)=(F_{L-1}^k)^T (F_L^k-Y)$, we get
    \begin{align*}
	&\tr(G-F_L^k)(F_L^k-Y)^T\\
	&= -\eta \tr((F_{L-1}^k)^T (F_L^k-Y)(F_L^k-Y)^T F_{L-1}^k)\\
	&\leq -\eta \frac{\alpha_0^2}{4} \norm{F_L^k-Y}_F^2 
    \end{align*}
    where we used our assumption $n_{L-1}\geq N$ to obtain $\evmin{(F_{L-1}^k)^T(F_{L-1}^k)}=\svmin{F_{L-1}^k}^2,$
    and the induction assumption implies that $\svmin{F_{L-1}^k}\geq \frac{1}{2}\alpha_0.$
    Define $Q_1,Q_2$ as above. Putting all these bounds together, we get
    \begin{align*}
	\Phi(\theta_{k+1}) 
	&\leq 
	\left[ 1-\eta\frac{\alpha_0^2}{4}
	+ \eta^2 Q_1^2 
	+ \eta Q_2 
	\right]
	\Phi(\theta_k) \\
	&\leq \left[ 1-\eta \left[\frac{\alpha_0^2}{4} - 2Q_2 \right] \right] \Phi(\theta_k) && \textrm{By \eqref{eq:eta}} \\ 
	&\leq \left[ 1-\eta \frac{\alpha_0^2}{8} \right] \Phi(\theta_k) && \textrm{By }\eqref{eq:initS}
    \end{align*}
\end{proof}
We remark that Theorem \ref{thm:general} also holds for networks with biases.
Furthermore, its statement is meaningful for $\alpha_0>0$,
in which case there exists a $\theta$ such that $F_{L-1}$ has full row rank,
and thus the network can fit arbitrary labels.

Convergence of gradient descent for { deep ReLU nets} under the { square loss} has been studied in \cite{AllenZhuEtal2018,ZouGu2019},
where the widths of all the hidden layers scales as order of $N^8\textrm{poly}(L)$.
Here, Theorem \ref{thm:general} shows that a single wide layer of width $N$ suffices, 
and in particular, all the remaining layers can have arbitrary constant widths.
We remark that these results are not directly comparable at this point since it is not clear 
how likely the conditions \eqref{eq:initW}-\eqref{eq:initS} are satisfied when a common/practical initialization is considered.
We address this question in the next section.
%

\section{Applications of Theorem \ref{thm:general}}\label{sec:apps}
This section discusses the satisfiability of the conditions \eqref{eq:initW}-\eqref{eq:initS} of Theorem \ref{thm:general}
under different initialization schemes. 
To do so, it suffices to show a lower bound on $\alpha_0$ (as given in \eqref{eq:notation_init}), 
and an upper bound on both $\bar{\lambda}_l$ and the initial loss $\sqrt{2\Phi(\theta_0)}$,
and finally plugging these bounds into the above sufficient conditions to get a concrete requirement on layer widths.
More details are given in the following sections.

\subsection{Satisfiability of \eqref{eq:initW}-\eqref{eq:initS}: Width $N$ Suffices}\label{sec:N}
First, let us show that all the conditions \eqref{eq:initW}-\eqref{eq:initS} of Theorem \ref{thm:general} are satisfied
for a subset of points in parameter space. 
In particular, this does not require any additional requirements on the layer widths. 
Thus, if GD starts from one of these initial points, 
then it will converge to a global optimum provided that the width of the last hidden layer is at least $N$.

We apply Theorem \ref{thm:general} for $C_l=1, l\in[L].$
Let $\theta_0=(W_1^0,\ldots,W_{L-1}^0,W_L^0=0),$
where $(W_l^0)_{l=1}^{L-1}$ are chosen s.t.\ $\alpha_0=\svmin{F_{L-1}(\theta_0)}>0.$
Since $W_L^0=0$, we have $\sqrt{2\Phi(\theta_0)}=\norm{Y}_F.$
Let $\tilde{\theta}_0=(\beta W_1^0,\ldots,\beta W_{L-1}^0,W_L=0).$
Then, the condition \eqref{eq:initW} is satisfied at $\tilde{\theta}_0$ if
\begin{align}\label{eq:cond1}
    \beta^{2(L-1)} \alpha_0^2 \geq 16 \norm{X}_F \norm{Y}_F \prod_{l=1}^{L-1} (\beta\norm{W_l^0}_2+1) .
\end{align}
The LHS of the above inequality is a polynomial of degree $2(L-1)$ in $\beta$, whereas the RHS is a polynomial of degree $L-1.$
Thus, once the parameters of $\theta_0$ are fixed (as above), the inequality \eqref{eq:cond1} will be satisfied for a sufficiently large value of $\beta.$
Similarly, the condition \eqref{eq:initS} is satisfied at $\tilde{\theta}_0$ if
\begin{align}\label{eq:cond2}
    \beta^{2(L-1)} \alpha_0^2 \geq 16 \norm{X}_F^2 \sum_{l=1}^{L-1} \prod_{\substack{p=1\\p\neq l}}^{L-1} (\beta\norm{W_p^0}_2+1)^2.
\end{align}
The LHS of \eqref{eq:cond2} has degree $2(L-1)$, whereas the RHS has degree $2(L-2).$
Thus, \eqref{eq:cond2} is satisfied for large enough $\beta.$
The same argument applies to \eqref{eq:initF}.
As a result, all the conditions of Theorem \ref{thm:general} are met at $\tilde{\theta}_0$ for large enough $\beta.$

\subsection{LeCun's Initialization for Two-Layer ReLU Nets: Width $N^2$ Suffices}\label{sec:LeCun}
In this section, we show that all the conditions \eqref{eq:initW}-\eqref{eq:initS} of Theorem \ref{thm:general} are satisfied
for two-layer networks under LeCun's initialization,
provided that a stronger condition on the width is satisfied, namely $n_{L-1}=\Omega(N^2).$
The more general case of deep architecture is analyzed in Section \ref{sec:LeCun_deep}.

For simplicity, we assume the following assumptions on the data (standard in the literature):
\emph{(i)} the samples are i.i.d.\ sub-gaussian vectors with $\norm{X_{i:}}_2=\sqrt{d}$ and $\norm{X_{i:}}_{\psi_2}=\mathcal{O}(1)$,
\emph{(ii)} the ground-truth labels satisfy $\norm{Y_{i:}}_2=\mathcal{O}(1)$ for all $i\in[N]$,
and \emph{(iii)} the number of outputs $n_2$ is a constant.
For 2-layer nets, the conditions \eqref{eq:initW}-\eqref{eq:initS} become:
    \begin{align}
	&\alpha_0^2 \geq 16\norm{X}_F \max\left(\frac{\bar{\lambda}_2}{C_1},\frac{\bar{\lambda}_1}{C_2}\right) \sqrt{2\Phi(\theta_0)} \label{eq:initW2}\\
	&\alpha_0^3 \geq 32\norm{X}_F^2 \bar{\lambda}_{2} \sqrt{2\Phi(\theta_0)} \label{eq:initF2} \\
	&\alpha_0^2 \geq 16\norm{X}_F^2 \bar{\lambda}_2^2 \label{eq:initS2} 
    \end{align}
where $\alpha_0=\svmin{XW_1^0}$, $\bar{\lambda}_1=\norm{W_1^0}_2+C_1$, $\bar{\lambda}_2=\norm{W_2^0}_2+C_2$, $W_1^0\in\RR^{n_0\times n_1}$ and $W_2^0\in\RR^{n_1\times n_2}.$
Pick $C_1=C_2=1.$
Consider LeCun's initialization: 
\begin{align}
    (W_1^0)_{ij}\distas{}\mathcal{N}(0,1/n_0),\;
    (W_2^0)_{ij}\distas{}\mathcal{N}(0,1/n_1).
\end{align}
Then, by standard bounds on the operator norm of Gaussian matrices (see Theorem 2.12 of \cite{Davidson2001}), 
we have w.p.\ $\geq 1-e^{-\Omega(n_1)}$ that
\begin{align}
    \bar{\lambda}_1=\mathcal{O}\left(\frac{\sqrt{n_1}}{\sqrt{n_0}}\right),\quad
    \bar{\lambda}_2=\mathcal{O}(1).
\end{align}
Using Matrix-Chernoff inequality, one can easily show that (see e.g.\ Lemma 5.2 of \cite{QuynhNTK2021}) w.p.\ $\geq 1-\delta$,
\begin{align}
    \alpha_0\geq\sqrt{n_1\lambda_*/4},
\end{align}
as long as it holds $n_1\geq\tilde{\Omega}(N/\lambda_*)$,  
where $\lambda_*=\evmin{\E_{w\distas{}\mathcal{N}(0,n_0^{-1}\Id_{n_0})} [\sigma(Xw)\sigma(Xw)^T]}$ 
and $\tilde{\Omega}$ hides logarithmic factors depending on $\delta$.
Similarly, by using a standard concentration argument, we have w.p.\ $\geq 1-\delta$
\begin{align}
    \sqrt{2\Phi(\theta_0)}\leq\tilde{\mathcal{O}}(\sqrt{N}) .
\end{align}
The proof of this can be found in Lemma C.1 of \cite{QuynhMarco2020}.
Plugging all these bounds into \eqref{eq:initW2}-\eqref{eq:initS2},
we obtain that all these conditions are satisfied for $n_1=\Omega(N^2\lambda_*^{-2}.)$
Lastly, our data assumptions stated above allow us to apply Theorem 3.3 of \cite{QuynhMarco2020},
which implies w.h.p.\ that $\lambda_*\geq\Omega(1)$ as long as $d\geq N^r$ for some fixed integer constant $r\geq 2.$
Thus, all the conditions of Theorem \ref{thm:general} are satisfied for $n_1=\Omega(N^2)$.

As a remark, quadratic bounds on the layer width for two-layer ReLU nets have been also claimed in \cite{SongYang2020} 
(albeit for a different initialization).
The main differences consist in the fact that (i) we train all layers of the network while \cite{SongYang2020} train only the hidden layer, 
and (ii) our proof works for {\em arbitrary} depth,
and (iii) it does not require to bound the number of sign flips of the activation neurons,
nor the Lipschitz constant of the gradient.
This last property makes our proof significant simpler.

\subsection{LeCun's Initialization for Deep ReLU Nets: One Layer of Width $N^3$ Suffices}\label{sec:LeCun_deep}
This section considers the same type of initialization as in Section \ref{sec:LeCun}, except two differences:
i) we treat the case of deep nets, 
and ii) the variance of the output weights at the initialization is chosen to be smaller. 
Some previous studies e.g.\ \cite{AllenZhuEtal2018, ZouGu2019, ChenCaoZouGu2019} 
consider a regime in which the depth $L$ may grow with $N$.
In this section, we regard $L$ as a constant and obtain improved bounds in terms of the dependency on $N$.
More specifically, the best existing over-parameterization condition for the current setting (namely, deep ReLU nets + square loss) is given in
\cite{ZouGu2019}, where every hidden layer is required to have $\Omega(N^8)$ neurons.
The following corollary shows that all the conditions \eqref{eq:initW}-\eqref{eq:initS} of Theorem \ref{thm:general} are satisfied
for $n_{L-1}=\Omega(N^3)$. 
In particular, all the remaining layers can have arbitrary {\em sublinear} widths.

Let us assume that the training samples are drawn i.i.d.\ from the following family of distributions, denoted as $P_X$.
\begin{assumptions}\label{ass:data_dist}
    $P_X$ satisfies the following conditions:
    \begin{enumerate}
	\item For every Lipschitz continuous function $f:\RR^{n_0}\to\RR$, we have
	$\PP(\abs{f(x)-\E f(x)}>t)\leq 2\bigexp{-ct^2 / \norm{f}_{\Lip}^2}$ for all $t>0$, where $c>0$ is some absolute constant.
	\item $\E\norm{x}_2=\Theta(\sqrt{n_0})$, $\E\norm{x}_2^2=\Theta(n_0)$ and $\E\norm{x-\E x}_2^2=\bigOmg{n_0}.$ 
    \end{enumerate}
\end{assumptions}
Many distributions of interest satisfy Assumption \ref{ass:data_dist}, e.g.\ the standard Gaussian distribution and
the uniform distribution on the sphere.
In general, the first condition could cover
all those distributions satisfying the log-Sobolev inequality with data-independent constant.
The second condition is just about the scaling of the data. 
Here, we assume the data has norm of order $\sqrt{n_0}$, 
but it can also have other scalings.

The result of this section is stated below.
\begin{corollary}\label{cor:LeCun_deep}
    Consider a set of i.i.d.\ samples $\Set{X_{i:}}_{i=1}^{N}$ from a data distribution 
    which satisfies Assumption \ref{ass:data_dist}.
    Let the true labels satisfy $\norm{Y_{i:}}_2=\mathcal{O}(1)$ for all $i\in[N].$
    Fix any even integer constant $r>0$, and $\delta>0.$
    Let the widths of the network satisfy the following conditions:
    \begin{align}
	&n_l=\Theta(m),\quad\forall\,l\in[L-2],\\
	&n_{L-1}=\bigTildeOmg{N^3\max(m,n_0)^3}
    \end{align}
    where $m$ is some variable which can depend on $N$,
    and $\tilde{\Omega}$ hides logarithmic terms in $N$ and $\delta^{-1}.$
    Let the weights of the network be initialized as
    \begin{align}
	&(W_l)_{ij}\distas{}\mathcal{N}\left(0,\frac{1}{n_{l-1}}\right), \forall\,l\in[L-1],\\
	&(W_L)_{ij}\distas{}\mathcal{N}\left(0,\frac{1}{n_{L-1}^{4/3}}\right).
    \end{align}
    Then, for a small enough learning rate, gradient descent converges to a global optimum w.p. at least
    \begin{align*}
	&1 - \delta - N^2 \bigexp{ - \bigOmg{ \frac{\min(m,n_0)}{N^{2/(r-0.1)} \log(m)^{L-3}} } } .
    \end{align*}
\end{corollary}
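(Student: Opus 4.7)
The plan is to verify conditions \eqref{eq:initW}-\eqref{eq:initS} of Theorem \ref{thm:general} at the given random initialization and then invoke the theorem directly. I would take $C_l$ comparable to $\|W_l^0\|_2$ for each $l$, so that $\bar{\lambda}_l = \Theta(\|W_l^0\|_2)$; in particular $C_L = \Theta(n_{L-1}^{-1/6})$ to reflect the smaller output-layer variance $1/n_{L-1}^{4/3}$. With this choice, the verification reduces to three ingredients: (i) layerwise upper bounds on $\bar{\lambda}_l$, (ii) an upper bound on $\sqrt{2\Phi(\theta_0)}$, and (iii) a lower bound on $\alpha_0 = \sigma_{\min}(F_{L-1}^0)$. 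Once these are in place, substituting into each of the three conditions yields a polynomial inequality in $n_{L-1}$, and I expect the most stringent one to force $n_{L-1} = \tilde{\Omega}(N^3 \max(m, n_0)^3)$, matching the hypothesis.

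Ingredients (i) and (ii) are relatively routine. For (i), I would apply the standard Gaussian operator norm bound (\GaussOpNorm) to each $W_l^0$: since $(W_l^0)_{ij} \sim \mathcal{N}(0, 1/n_{l-1})$, one has $\|W_l^0\|_2 = O(1 + \sqrt{n_l/n_{l-1}})$ with probability $1 - e^{-\Omega(\min(n_{l-1}, n_l))}$. This gives $O(1)$ on the middle layers, $O(\sqrt{\max(1, m/n_0)})$ at $l=1$, $O(\sqrt{n_{L-1}/m})$ at $l=L-1$, and $O(n_{L-1}^{-1/6})$ at $l=L$, from which $\bar{\lambda}_{1\to L-1} = O(\sqrt{n_{L-1}/\min(m, n_0)})$ and $\bar{\lambda}_{1\to L} = O(n_{L-1}^{1/3}/\sqrt{\min(m, n_0)})$. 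For (ii), a routine forward-propagation argument under Assumption \ref{ass:data_dist} gives $\|F_{L-1}^0\|_F = \tilde{O}(\sqrt{N n_{L-1}})$; since the columns of $F_L^0 = F_{L-1}^0 W_L^0$ are independent Gaussians with covariance $n_{L-1}^{-4/3}(F_{L-1}^0)^T F_{L-1}^0$, we get $\|F_L^0\|_F = \tilde{O}(\sqrt{N n_L}/n_{L-1}^{1/6})$, and combined with $\|Y\|_F = O(\sqrt{N})$ this yields $\sqrt{2\Phi(\theta_0)} = \tilde{O}(\sqrt{N})$.

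Ingredient (iii) is where the main difficulty lies. The plan is to extend Lemma 5.2 of \cite{QuynhNTK2021} (which handles two-layer nets) to deep ReLU networks, obtaining $\alpha_0 \geq c\sqrt{n_{L-1}}$ with the stated failure probability. I would proceed by induction through the layers: at each step, one shows that distinct inputs produce sufficiently separated hidden feature vectors, using the Lipschitz concentration inequality from Assumption \ref{ass:data_dist}(1) together with the Lipschitz constant bounds from (i). The $\log(m)^{L-3}$ factor will arise from propagating these pairwise separations through the $L-3$ intermediate layers of width $m$, while the $N^{2/(r-0.1)}$ factor should reflect a minimum-data-separation guarantee for the input Gram matrix, inherited from a result in the flavor of Theorem 3.3 of \cite{QuynhMarco2020} under the growing-dimension regime. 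Once pairwise separation is secured at layer $L-1$, a Matrix-Chernoff argument applied to the random features of the wide last hidden layer upgrades it to a uniform lower bound on $\sigma_{\min}(F_{L-1}^0)$.

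Finally, plugging the three ingredients into \eqref{eq:initW}-\eqref{eq:initS} and matching powers reduces the satisfiability to the stated width condition, and a union bound over the events in (i)-(iii) delivers the overall failure probability, the dominant contribution being the $N^2 \bigexp{-\Omega(\cdot)}$ term from (iii). The main obstacle, as anticipated, is (iii): propagating singular-value / separation information through $L-1$ non-smooth ReLU layers without the failure probability blowing up polynomially in $N$ or exponentially in $L$.
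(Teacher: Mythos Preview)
Your plan is correct and follows the same three-ingredient strategy as the paper: bound $\bar\lambda_l$, bound $\sqrt{2\Phi(\theta_0)}$, lower-bound $\alpha_0$, then plug into \eqref{eq:initW}--\eqref{eq:initS} and invoke Theorem~\ref{thm:general}. The one substantive difference is ingredient~(iii): the paper does \emph{not} re-derive the deep $\alpha_0$ bound from the two-layer case but simply cites it as Lemma~\ref{lem:bound_svmin_Fk}, which is already stated for arbitrary depth in \cite{QuynhNTK2021} and carries exactly the failure probability appearing in the corollary---so the obstacle you flag is already handled in the literature and need not be reproved. On the remaining details, the paper takes $C_l=1$ for $l\in[L-2]$, $C_{L-1}=n_{L-1}^{1/2}$, $C_L=n_{L-1}^{-1/6}$ (slightly different from your $C_l\asymp\|W_l^0\|_2$ at $l=1,L-1$), and bounds $\sqrt{2\Phi(\theta_0)}=\mathcal{O}(\sqrt{Nn_0})$ via Lemma~C.1 of \cite{QuynhMarco2020} rather than your sharper $\tilde{\mathcal{O}}(\sqrt{N})$; either set of choices verifies \eqref{eq:initW}--\eqref{eq:initS} under the stated width condition.
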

We remark that the probability of Corollary \ref{cor:LeCun_deep} can be made arbitrarily close to $1$ as long as $\min(m,n_0)$ is not super-polynomially smaller than $N.$
For instance, one can pick $m,n_0=\tilde{\Omega}(N^{1/p})$ for a sufficiently large constant $p>0$,
in which case the main width condition of Corollary \ref{cor:LeCun_deep} becomes $n_{L-1}=\Omega(N^{3+\epsilon})$, for some small $\epsilon\in(0,1).$


In order to prove Corollary \ref{cor:LeCun_deep}, we use the following bound on $\alpha_0$ from \cite{QuynhNTK2021}.
Below is a restatement of their result for the special case of our interest (the smallest singular value of the last hidden layer).
\begin{lemma}\label{lem:bound_svmin_Fk}
    Let Assumption \ref{ass:data_dist} hold.
    Let $(W_l)_{ij}\distas{}\mathcal{N}(0,\beta_l^2)$ for all $l\in[L-1],i\in[n_{l-1}],j\in[n_l].$
    Fix any even integer $r>0$, and $\delta>0$.
    Suppose that
    \begin{align}
	&n_{L-1} = \bigOmg{N\log(N) \log\Big(\frac{N}{\delta}\Big)},\\
	&\prod_{l=1}^{L-3} \log(n_l)=\littleO{\min_{l\in[0,L-2]} n_l}.
    \end{align}
    Then one has
    \begin{align}
	\svmin{F_{L-1}}^2 
	=\bigTheta{ n_0\prod_{l=1}^{L-1} n_l\beta_l^2 }
    \end{align}
    w.p.\ 
	$\geq 1 - \delta - N^2 \bigexp{ - \bigOmg{ \frac{\min_{l\in[0,L-2]}n_l}{N^{2/(r-0.1)} \prod_{l=1}^{L-3}\log(n_l)} } } .$
\end{lemma}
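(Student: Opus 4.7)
The plan is to peel off the last hidden layer by matrix concentration, and then propagate row-norm and angular-separation bounds through the remaining layers by induction on depth.

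First, I would condition on $F_{L-2}$ and observe that the columns $(F_{L-1})_{:j}=\sigma(F_{L-2}(W_{L-1})_{:j})\in\RR^N$ are i.i.d.\ copies of $v=\sigma(F_{L-2}w)$ with $w\sim\Nc(0,\beta_{L-1}^2\Id)$. Then $F_{L-1}F_{L-1}^T=\sum_{j=1}^{n_{L-1}}v_j v_j^T$ is a sum of $n_{L-1}$ independent PSD rank-one matrices with conditional mean $n_{L-1}\bK$, where $\bK_{ij}=\E[\sigma(\inner{(F_{L-2})_{i:},w})\sigma(\inner{(F_{L-2})_{j:},w})\mid F_{L-2}]$ is the layer-$(L-2)$ ReLU arc-cosine kernel. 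Applying the matrix Chernoff inequality (\MatrixChernoff) conditionally on $F_{L-2}$ yields
\begin{align*}
    \tfrac12 n_{L-1}\evmin{\bK}\leq \svmin{F_{L-1}}^2\leq \tfrac32 n_{L-1}\evmax{\bK}
\end{align*}
with conditional failure probability $\leq\delta/2$ provided $n_{L-1}=\bigOmg{N\log(N)\log(N/\delta)}$, which is the first width hypothesis of the lemma.

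Second, I would control $\evmin{\bK}$ and $\evmax{\bK}$ via the closed form
\begin{align*}
    \bK_{ij}=\tfrac{\beta_{L-1}^2}{2\pi}\|(F_{L-2})_{i:}\|\,\|(F_{L-2})_{j:}\|\,\big(\sin\theta_{ij}+(\pi-\theta_{ij})\cos\theta_{ij}\big),
\end{align*}
where $\theta_{ij}\in[0,\pi]$ is the angle between rows $i,j$ of $F_{L-2}$. The diagonal entries equal $\tfrac12\beta_{L-1}^2\|(F_{L-2})_{i:}\|^2$, and as long as $\theta_{ij}$ is bounded away from $0$ by an absolute constant the off-diagonals are strictly smaller. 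A Gershgorin/diagonal-dominance argument then gives $\evmin{\bK}=\bigOmg{\beta_{L-1}^2\min_i\|(F_{L-2})_{i:}\|^2}$ and $\evmax{\bK}=\bigO{N\beta_{L-1}^2\max_i\|(F_{L-2})_{i:}\|^2}$. The task thereby reduces to two claims about $F_{L-2}$ with high probability: all row norms satisfy $\|(F_{L-2})_{i:}\|^2=\bigTheta{n_0\prod_{p=1}^{L-2}n_p\beta_p^2}$, and all pairwise angles satisfy $\theta_{ij}^{L-2}\in[c_1,\pi-c_2]$ for absolute constants $c_1,c_2>0$. Combining these with the matrix-Chernoff conclusion above gives exactly the $\bigTheta{n_0\prod_{l=1}^{L-1}n_l\beta_l^2}$ bound asserted by the lemma.

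Third, these two claims are proved by induction on $l\in[0,L-2]$, conditioning on $F_{l-1}$ at each step. The map $W_l\mapsto\|(F_l)_{i:}\|^2$ is Lipschitz in the Gaussian entries of $W_l$ with constant proportional to $\|(F_{l-1})_{i:}\|^2$, so Gaussian concentration gives multiplicative $(1\pm o(1))$ control around the conditional mean $\tfrac12 n_l\beta_l^2\|(F_{l-1})_{i:}\|^2$ with failure probability $2\exp(-\bigOmg{n_l})$. The same technique controls $\inner{(F_l)_{i:},(F_l)_{j:}}$, whose conditional mean is the arc-cosine recursion applied to $\cos\theta_{ij}^{l-1}$. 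Union-bounding over $\binom{N}{2}$ pairs and $L-2$ layers produces the $N^2\exp(-\bigOmg{\cdot})$ failure term; the factor $\prod_{l=1}^{L-3}\log(n_l)$ enters because the per-layer Lipschitz constants themselves are random and only known up to factors involving the norms/angles from deeper layers, and these nested slacks compound multiplicatively. The base case $l=0$ invokes Assumption \ref{ass:data_dist}: part~2 gives $\|X_{i:}\|=\bigTheta{\sqrt{n_0}}$, and part~1 applied to the $1$-Lipschitz function $x\mapsto\inner{x,X_{j:}}/\|X_{j:}\|$ delivers angular separation; the exponent $N^{2/(r-0.1)}$ arises from converting the sub-Gaussian-type tail into a polynomial-moment bound via Markov at the $r$-th moment, which is the device that makes the conclusion robust across all distributions covered by Assumption \ref{ass:data_dist}.

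The main obstacle is the angle-propagation step. The iteration $\cos\theta\mapsto\tfrac{1}{\pi}(\sin\theta+(\pi-\theta)\cos\theta)$ has $1$ as an attractive fixed point, so angles can collapse toward $0$ as one moves through layers. One must argue that if the initial angles are bounded away from $0$ by an absolute constant (guaranteed by Assumption \ref{ass:data_dist} once $n_0$ dominates the moment term) and $L$ is fixed, then after $L-2$ applications of the arc-cosine update the angles still sit in the regime where $\evmin{\bK}$ has the asserted order; simultaneously, the nested per-layer concentration errors must not accumulate faster than this angular gap. Balancing these two requirements is precisely what forces the explicit hypothesis $\prod_{l=1}^{L-3}\log(n_l)=\littleO{\min_{l\in[0,L-2]}n_l}$.
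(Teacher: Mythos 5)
The paper does not actually prove this lemma: immediately before stating it, the author writes that it is ``a restatement of their result'' from \cite{QuynhNTK2021}, and defers the proof entirely to that reference. So there is no in-paper proof to compare against. That said, your two-step strategy --- condition on $F_{L-2}$, apply a matrix concentration bound to $F_{L-1}F_{L-1}^T=\sum_j v_jv_j^T$ to reduce to the expected arc-cosine kernel $\bK$, then control row norms and pairwise angles of $F_{L-2}$ by a layer-by-layer induction using Assumption~\ref{ass:data_dist} at the base --- is the right high-level picture of what the cited work does, and it correctly identifies the role of each hypothesis (the $N\log N\log(N/\delta)$ width for the matrix-Chernoff-style step; the $\prod\log n_l = o(\min n_l)$ condition for the compounding per-layer concentration slack; the $N^{2/(r-0.1)}$ exponent from a moment-truncation device at the input).

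There is, however, a genuine gap in your argument for lower-bounding $\evmin{\bK}$. You claim that since $\theta_{ij}$ is bounded away from $0$, the off-diagonals are ``strictly smaller'' than the diagonal, and that a Gershgorin/diagonal-dominance argument then gives $\evmin{\bK}=\bigOmg{\beta_{L-1}^2\min_i\|(F_{L-2})_{i:}\|^2}$. This does not work: with the normalized arc-cosine kernel, an off-diagonal entry at a fixed angle $\theta_0\in(0,\pi/2)$ is $\frac{1}{\pi}(\sin\theta_0+(\pi-\theta_0)\cos\theta_0)$, which is a \emph{constant} fraction of the diagonal value $1/2$ (e.g.\ $\approx 0.4$ at $\theta_0=\pi/2$ after accounting for the $1/2$ normalization; more generally $\Theta(1)$). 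With $N-1$ such off-diagonals in each row, the Gershgorin disc has radius $\Theta(N)$ times the diagonal, so the lower bound it yields is negative for $N\geq 2$. Angle separation alone, via diagonal dominance, cannot give you a positive $\evmin$. The actual argument must exploit the nonlinearity more carefully --- typically via the Hermite/Gegenbauer expansion $\kappa(t)=\sum_k c_k^2 t^k$ of the arc-cosine kernel, which writes $\bK$ as a nonnegative combination of Hadamard powers of the normalized Gram matrix of $F_{L-2}$; one then isolates a high enough Hadamard power for which the off-diagonals $(\cos\theta_{ij})^k$ are small enough that diagonal dominance \emph{does} hold, or invokes a uniform lower bound on such Hadamard powers. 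This is the device that turns a constant-order angular gap into a constant-order spectral gap, and it is absent from your sketch. A secondary, smaller issue: the columns $v_j$ are unbounded (they are $\sigma$ of Gaussians), so the matrix Chernoff inequality cannot be applied directly and must be preceded by a truncation step or replaced by a matrix Bernstein bound with sub-exponential control.
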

We are now ready to prove the corollary.
\paragraph{Proof of Corollary \ref{cor:LeCun_deep}.}
    Let $a=\frac{4}{3},b=-\frac{1}{6},c=\frac{1}{2}.$
    By applying Theorem \ref{thm:general} for
    \begin{align*}
	C_l=
	\begin{cases}
	    1 & l\in[L-2],\\
	    n_{L-1}^{c} & l=L-1, \\
	    n_{L-1}^{b} & l= L,
	\end{cases}
    \end{align*}
    it suffices to show that all the conditions \eqref{eq:initW},\eqref{eq:initF},\eqref{eq:initS} are satisfied
    under the stated assumptions. 
    By Lemma \ref{lem:bound_svmin_Fk}, we have $\alpha_0^2 = \bigOmg{n_{L-1}}$ with the same probability as stated in the corollary.
    By Theorem 4.4.5 of \cite{vershynin2018high}, we have w.p.\ $\geq 1-e^{-\Omega(\min(n_0,m,n_{L-1}))}$ that
    \begin{align*}
	\bar{\lambda}_l 
	= 
	\begin{cases}
	    \bigO{n_{L-1}^{(1-a)/2} + n_{L-1}^b} & l=L\\
	    \bigO{\frac{\sqrt{n_{L-1}}}{\sqrt{m}} +  n_{L-1}^c} & l=L-1\\
	    \bigO{1} & l\in[2,L-2]\\
	    \bigO{\frac{\max(\sqrt{m},\sqrt{n_0})}{\sqrt{n_0}}} & l=1
	\end{cases}
    \end{align*}
    Define the shorthand $D=\norm{X}_F\sqrt{2\Phi(\theta_0)} .$
    Then, the condition \eqref{eq:initW} is satisfied for 
    \begin{align*}
	\begin{cases}
	    n_{L-1}^{1+b-\max(1/2,c)} \gtrsim D \max\left(1,\sqrt{\frac{m}{n_0}}\right),\\
	    n_{L-1}^{1+c-\max(b,(1-a)/2)} \gtrsim D \max\left(1,\sqrt{\frac{m}{n_0}}\right),\\
	    n_{L-1}^{1-\max(1/2,c)-\max(b,(1-a)/2)} \gtrsim D \max\left(1,\sqrt{\frac{m}{n_0}}\right) .
	\end{cases}
    \end{align*}
    Next, the condition \eqref{eq:initS} is satisfied for
    \begin{align*}
	n_{L-1}^{1-2\max(1/2,c)-2\max(b,(1-a)/2)} \gtrsim \max\left(1,\frac{m}{n_0}\right) \norm{X}_F^2 .
    \end{align*}
    Given the above condition, \eqref{eq:initF} is satisfied if it holds in addition that
    $\alpha_0\gtrsim \bar{\lambda}_L^{-1} \sqrt{2\Phi(\theta_0)}$, or equivalently,
    \begin{align*}
	n_{L-1}^{1/2+\max(b,(1-a)/2)} \gtrsim \sqrt{2\Phi(\theta_0)} .
    \end{align*}
    So far, all the conditions \eqref{eq:initW},\eqref{eq:initF},\eqref{eq:initS} are satisfied for
    \begin{align}\label{eq:star}
	\begin{cases}
	    n_{L-1}^{1/3} \gtrsim\norm{X}_F^2 \max\left(1,\frac{m}{n_0}\right),\\
	    n_{L-1}^{1/3} \gtrsim\norm{X}_F\sqrt{2\Phi(\theta_0)} \max\left(1,\frac{\sqrt{m}}{\sqrt{n_0}}\right),\\
	    n_{L-1}^{1/3} \gtrsim\sqrt{2\Phi(\theta_0)} .
	\end{cases}
    \end{align}
    By Lemma C.1 of \cite{QuynhMarco2020}, we have w.p.\ $\geq 1-e^{-\Omega(n_0)}$ that
    $\norm{F_L(\theta_0)}_F\lesssim\norm{X}_F.$
    Moreover, Assumption \ref{ass:data_dist} implies $\norm{x_i}=\Theta(n_0)$ w.p.\ $\geq 1-e^{-\Omega(n_0)}.$
    Taking the union bound, one obtains $\norm{X}_F=\mathcal{O}(\sqrt{Nn_0})$ w.p.\ $\geq 1-Ne^{-\Omega(n_0)}.$  
    By our data assumption, it holds $\norm{Y}_F=\mathcal{O}(\sqrt{N})$. 
    These last two facts imply $\sqrt{2\Phi(\theta_0)}\leq \norm{F_L(\theta_0)}_F+\norm{Y}_F\leq\mathcal{O}(\sqrt{Nn_0}).$
    Combining all the bounds above, one conclude that all the conditions in \eqref{eq:star} are satisfied for $n_{L-1}=\bigTildeOmg{N^3\max(m,n_0)^3}.$



\section{Further Related Work} 
Some earlier works consider linearly separable data, see e.g.\ \cite{SoudryEtal2018, BrutzkusEtal2018} and classification losses,
and they show that (stochastic) gradient descent can find a global (max margin) solution which generalizes well.
In contrast, the focus of this work is on the minimum level of over-parameterization required for general/worst-case data
(Theorem \ref{thm:general} makes no assumption on the training data) that GD finds a global optimum for a regression loss.

\cite{DuEtal2019} considers deep networks with the NTK parameterization \cite{JacotEtc2018} and show that
if all the hidden layers of the network have minimum width $\Omega(N^4)$, then GD converges to a global optimum.
Their proof requires to track the evolution of the various Gram matrices defined recursively through all the layers.
Recently, the width condition has been improved to $\Omega(N^3)$ by \cite{HuangYau2020} for the same setting.
\cite{QuynhMarco2020} shows the global convergence of GD for deep pyramidal nets with standard parameterization, 
where the width of the first hidden layer can scale linearly (or quadratically) in $N$.


For the setting of this paper, we are only aware of the results of \cite{AllenZhuEtal2018,ZouGu2019}.
One crucial difference which led to the proof of this work is that:
in our setup, all the layers are optimized with standard GD, 
whereas in the prior works, the input and output layers are fixed, and only the inner layers are optimized.
Interestingly, considering this slightly different setting allows us to reach a simple proof.
In particular, to obtain a PL-like inequality, they used the part of the NTK corresponding to the gradient of the last hidden layer,
whereas we used the gradient w.r.t\ the output layer.
A few other works also study convergence of GD for ReLU networks, albeit for two-layer models,
see e.g.\ \cite{arora2019fine, DuEtal2018_ICLR, OymakMahdi2019}.
Essentially, these results require that the number of hidden neurons scales at least as order of $N^4$.

Another closely related work is \cite{Amit2017}.
This paper shows that if the network is sufficiently large, 
the learning rate is small enough, the number of SGD steps is large enough,
then SGD can learn any function in the conjugate kernel space associated to the output of the last hidden layer.
Some crucial differences to our work are as follows.
First, they consider online SGD, whereas we consider standard GD on a fixed data set.
These two algorithms are different even if one sets the batch size to the size of the given training set, 
because in the former setting a fresh set of samples are drawn i.i.d.\ at every iteration, and thus the gradient 
at each iteration is not necessarily the same as the total gradient used by standard GD.
Second, their result requires the size of the network to scale inversely with $\textrm{poly}(\epsilon)$, 
where $\epsilon$ is the desired error. Thus, in order for the error to get down arbitrarily close to $0$, 
their network size needs to grow to infinity.
In contrast, our work provides guarantees for network sizes independent of $\epsilon$ (i.e.\ only dependent on $N$).
This difference also applies to several other works cited above.
Third, they do not provide guarantees that the loss is monotonically decreasing over the iterations like ours.
Fourth, their result shows that the loss after training is no worse than $\epsilon$ plus 
the minimum loss achievable by a function whose RKHS norm is bounded by a fixed variable $M$.
However, it is not clear what is this minimum loss for different choices of $M$.
This is especially important because their network size is also a polynomial function of $M$.
In contrast, our work provides an end-to-end result where the loss is ensured to converge w.h.p.\ to 0 for a fixed size network.
In terms of proof techniques, their proof is based on online SGD learning, 
while our proof is simpler and more transparent:
it is based on the fact that if the smallest singular value of the last hidden layer 
is bounded away from zero during training, then GD linearly converges to a global optimum.
Note that their proof as said is different, and it does not exploit this smallest singular value.
Finally, let us highlight that this single idea is not enough to prove the global convergence of the loss.
To show the convergence, most of the prior works require to study the local Lipschitz property of the Jacobian (or gradient of the loss),
which involves the study of various quantities related to the changes of activation patterns,
see e.g.\ \cite{AllenZhuEtal2018, ZouGu2019, OymakMahdi2019, SongYang2020, DuEtal2018_ICLR, arora2019fine}.
This has been a challenging task for deep ReLU nets since the derivative of ReLU is not Lipschitz continuous.
In contrast, the proof of Theorem \ref{thm:general} does not require such analysis.
The way it overcomes this issue is to introduce the transition matrix $G=F_{L-1}^k W_L^{k+1}$,
and then decompose the loss at each iteration according to $G$. 
As shown, the remaining steps of the proof follow mostly from the triangle inequality.
This makes the proof significantly simpler and more transparent.
Furthermore, we have shown that this proof leads to improved over-parameterization condition in terms of the dependency on the number of training samples.

\section*{Acknowledgement}
Quynh Nguyen acknowledges support from the European Research Council (ERC) under the European Union’s Horizon 2020 research 
and innovation programme (grant agreement no 757983).

\bibliography{regul}
\bibliographystyle{icml2021}

\end{document}